\def\eqref#1{equation~\ref{#1}}
\def\1{\bm{1}}
\DeclareMathAlphabet{\mathsfit}{\encodingdefault}{\sfdefault}{m}{sl}
\SetMathAlphabet{\mathsfit}{bold}{\encodingdefault}{\sfdefault}{bx}{n}
\DeclareMathOperator*{\argmax}{arg\,max}
\DeclareMathOperator*{\argmin}{arg\,min}
\theoremstyle{plain}
\newtheorem{theorem}{Theorem}[section]
\newtheorem{lemma}[theorem]{Lemma}
\theoremstyle{definition}
\newtheorem{definition}[theorem]{Definition}
\theoremstyle{remark}
\newtheorem{remark}[theorem]{Remark}
\pgfplotsset{compat=1.17}
\newcommand{
  \long\def\GRAPH # {}%
  \input{}
}[2]{
  \long\def\GRAPH ##1#2 {}%
  \input{#1}
}
\definecolor{bblue}{HTML}{1f77b4}
\definecolor{rred}{HTML}{d62728}
\definecolor{ggreen}{HTML}{2ca02c}
\definecolor{oorange}{HTML}{ff7f0e}
	\definecolor{green}{rgb}{0.16, 0.67, 0.53}
\definecolor{blue}{rgb}{0.19, 0.55, 0.91}
\definecolor{red}{rgb}{0.8, 0.25, 0.33}
\definecolor{orange}{rgb}{0.91, 0.45, 0.32}
\newcommand\footnoteref[1]{\protected@xdef\@thefnmark{\ref{#1}}\@footnotemark}
\DeclareRobustCommand{\legendsquare}[1]{%
  \textcolor{#1}{\rule{1ex}{1ex}}%
}
\newcommand{\ACMod}[1]{{#1}}
\def\Expect{{\mathbb E}}
\def\Prob{{\mathbb P}}
\def\min{\mathop{\rm min}}
\def\max{\mathop{\rm max}}
\def\argmax{\mathop{\rm argmax}}
\def\argmin{\mathop{\rm argmin}}
\def\sup{\mathop{\rm sup}}
\def\inf{\mathop{\rm inf}}
\newcommand{\V}[1]{{{\boldsymbol #1}}}
\newcommand{\x}{\V{x}}
\newcommand{\X}{\mathcal{X}}
\renewcommand{\1}{\V{1}}
\newcommand{\mymbox}[1]{\mbox{\scriptsize #1}}
\renewcommand{\Re}{\mathbb{R}}
\newcommand{\quoteIt}[1]{``#1''}
\newcommand{\network}{{\mathfrak{F}}}
\definecolor{lgreen}{HTML}{78C357}
\definecolor{dgreen}{HTML}{006400}
\definecolor{lblue}{HTML}{7BA9D0}
\definecolor{dblue}{HTML}{00008B}
\definecolor{newred}{HTML}{FF0000}
\definecolor{lred}{HTML}{FFC0CB}
\definecolor{salmon}{HTML}{FFA07A}
\definecolor{grey}{HTML}{808080}
\newcommand{\mxi}{{m}}
\newcommand{{\polx}}{{\boldsymbol{x}}}
\newcommand{\U}{{\mathcal{U}}}
\newcommand{\Dxipsi}{{\mathcal{D}_{\psi \xi}}}
\newcommand{\ECRO}[0]{TbS}
\newcommand{\EECRO}[0]{DTbS}
\newcommand{\CROES}[0]{{ETO-DbS}}
\newcommand{\CROCS}[0]{ETO-CPS}
\newcommand{\CROCCS}[0]{ETO-ACPS}
\renewcommand{\eqref}[1]{(\ref{#1})}
\newcommand{\removed}[1]{{}}
\tikzstyle{arrow} = [thick,->,>=stealth]
\tikzstyle{startstop} = [rectangle, rounded corners, 
\tikzstyle{end} = [rectangle, rounded corners, 
\tikzstyle{arrow} = [thick,->,>=stealth]
\tikzstyle{startstop} = [rectangle, rounded corners, 
\tikzstyle{end} = [rectangle, rounded corners, 
\begin{document}
\title{End-to-end Conditional Robust Optimization
}
\author[1]{Abhilash Chenreddy\footnote{Corresponding author\\  Email addresses: erick.delage@hec.ca (Erick Delage), abhilash.chenreddy@hec.ca (Abhilash Chenreddy)}}
\author[2]{Erick Delage}
\affil[1,2]{GERAD and Department of Decision Sciences, HEC Montr\'eal, 
Canada}

\date{\today}

\maketitle

\begin{abstract}
The field of Contextual Optimization (CO) integrates machine learning and optimization to solve decision making problems under uncertainty. Recently, a risk sensitive variant of CO, known as Conditional Robust Optimization (CRO), combines uncertainty quantification with robust optimization in order to promote safety and reliability in high stake applications. Exploiting modern differentiable optimization methods, 
we propose a novel end-to-end approach to train a CRO model in a way that accounts for both the empirical risk of the prescribed decisions and the quality of conditional coverage of the contextual uncertainty set that supports them. While guarantees of success for the latter objective are impossible to obtain from the point of view of conformal prediction theory, high quality conditional coverage is achieved empirically by ingeniously employing a logistic regression differentiable layer within the calculation of coverage quality in our training loss.  
We show that the proposed training algorithms produce decisions that outperform the traditional estimate then optimize approaches.
\end{abstract}

\section{Introduction}

In a standard Machine Learning (ML) setting, $\Psi \subseteq \mathbb{R}^m$ represent the input set and $\Xi \subseteq \mathbb{R}^m$ represent the output sets and we aim to learn a model $\network_\theta$  parameterized by $\theta$ that approximates the relationship between the input and output sets. In real-world applications, we usually have a dataset of $M$ samples, $\Dxipsi := \{( \psi_i, \xi_i)\}_{i=1}^M$ which are used to approximate the underlying input-output relationship learned by the model. For a new data sample $\psi \in \Psi$, the model trained on $\Dxipsi$ is used to predict a corresponding target $\xi = \network_{\theta}(\psi) $. Recently, there has been a growing interest in integrating this estimation process with the subsequent optimization process. In this context, the prediction is used within a cost minimization problem $\hat{x}^*(\psi):=\arg\min_{x\in\X}c(x,\network_\theta(\psi))$, where $\X\subseteq \Re^n$ is the set of feasible decisions and $c(x,\xi)$ the cost function. The intent is to produce an adapted decision with low out-of-sample expected cost $\Expect[c(\hat{x}^*(\psi),\xi)]$. 
 When there a mismatch between the predictive loss $\mathcal{L}$ and the cost function $c(x,\xi)$, a small error in estimating $\xi$ for a given $\psi$ can lead to highly suboptimal $x^*(\psi)$ (see \cite{elmachtoub2022smart}). Task-based (or decision-focused) learning (c.f. \cite{mandi2023decisionfocused,donti2017task}) addresses this issue by training the model $\network_{\theta}$ directly on the performance of the policy $x^*(\psi)$. By trading off predictive performance in favour of task performance, the task-based approach can give near optimal decisions.

In high stakes applications, a DM usually demonstrates of a certain degree of risk aversion by requiring some level of protection against a range of plausible future scenarios. A natural risk averse variant of integrated ML and optimization takes the form of Conditional Robust Optimization (see \cite{chenreddy2022data}), which integrates conformal prediction with robust optimization. Specifically, machine learning is first used to produce a contextually adapted uncertainty set $
\U(\psi)$ known to contain with high probability the realized $\xi$, which is then inserted to the conditional robust optimization model:
\begin{align}
    \label{eq:CRO}
    x^*(\psi) := \arg\min_{x \in \mathcal{X}} \max_{\xi \in \mathcal{U}(\psi)} c(x, \xi),
\end{align}

To this date, the methods proposed in the literature follow an Estimate Then Optimize (ETO) paradigm. Namely, data is first used to calibrate the contextual uncertainty set. 
This set is then used as an input to the CRO problem to get the adapted robust decision $x^*(\psi)$. However, this uncertainty set calibration process does not consider the downstream optimization task which can lead to misalignment between the initial estimation loss function and the robust optimization objectives.

In this paper, we propose a novel end-to-end learning framework for conditional robust optimization that constructs the contextual uncertainty set by accounting for the downstream task loss. Our contributions can be described as follows:
\begin{itemize}
    \item We propose for the first time an end-to-end training algorithm to produce contextual uncertainty sets that lead to reduced risk exposure for the solution of the down-stream CRO problem.
    \item We introduce a novel joint loss function aimed at enhancing the conditional coverage of the contextual uncertainty sets $\mathcal{U}(\psi)$ while improving the CRO objective
    \item We demonstrate through a set of synthetic environments that our end-to-end approach surpasses ETO approaches at the CRO task while achieving comparable if not superior conditional coverage with its learned contextual set.
    \item We show empirically how our end-to-end learning approach outperforms other state-of-the-art methods on a portfolio optimization problem using the real world data from the US stock market.    
\end{itemize}

\section{Related work}

\textbf{Estimate Then Optimize} Popularized by the initial work of \cite{hannah2010nonparametric} is a framework that integrates ML and optimization tasks. Several approaches are proposed to learn the conditional distribution from data. \cite{kannan2023residuals, sen2018learning} propose using residuals from the trained regression model to learn conditional distributions. \cite{bertsimas2020predictive} approach assigns weights to the historical observations of the parameters and solves the weighted SAA problem. Besides the CSO problems, There has been a growing interest in integrating ML techniques in Robust Optimization problems.\cite{chenreddy2022data} identifies clusters of the uncertain parameters based on the covariate data and calibrates the sets for these clusters. \cite{patel2023conformal} propose using non-convex prediction regions to construct uncertainty sets.\cite{blanquero2023contextual} constructs contextual ellipsoidal uncertainty sets by making normality assumptions.
\cite{ohmori2021predictive} uses non-parametric kNN model to identify the minimum volume ellipsoid to be used as an uncertainty set. \ACMod{\cite{sun2024predict} solves a robust contextual LP problem where a prediction model is first learned, then uncertainty is calibrated to match robust objectives.}
It is to be noted that all these CRO approaches follow the ETO paradigm.

\textbf{End-to-end learning} is a more recent stream of work that integrates the Estimation and Optimization tasks and trains using the downstream loss.
\cite{donti2017task} proposed using an end-to-end approach for learning probabilistic machine learning models using task loss. 
\cite{elmachtoub2022smart} learns contextual point predictor by minimizing the regret associated with implementing prescribed action based on the mean estimator.
\cite{amos2017optnet} uses implicit differentiation methods to train an end-to-end model. \cite{butler2023efficient} solves large-scale QPs using the ADMM algorithm that decouples the differentiation procedure for primal and dual variables. \cite{elmachtoub2022smart,mandi2020smart} propose using a surrogate loss function to train integrated methods to address loss functions with non-informative gradients. \ACMod{
\cite{wang2023learning} proposes learning a non-contextual uncertainty set by maximizing the expected performance across a set of  randomly drawn parameterized robust constrained problems while ensuring  guarantees on the probability of constraint satisfaction with respect to the joint distribution over perturbance and robust problems. \cite{costa2023distributionally} proposes a distributionally robust end-to-end system that integrates point prediction and  robustness tuning to the portfolio construction problem.}
We refer the reader to \cite{sadana2023survey} for a broader discussion on both ETO and end-to-end models.\\
\textbf{Uncertainty quantification} methods are employed to estimate the confidence of deep neural networks over their predictions \cite{ kontolati2022survey}. Common UQ approaches include using Bayesian methods like stochastic deep neural networks, ensembling over predictions from several models to suggest intervals and models that directly predict uncertain intervals. \cite{gawlikowski2021survey}. Beyond estimating predictive uncertainty, ensuring its statistical reliability is crucial for safe decision-making \cite{guo2017calibration}. Conformal prediction has become popular as a distribution-free calibration method \cite{shafer2008tutorial}. Although conformal prediction ensures marginal coverage, attaining conditional coverage in the most general case is desirable  \cite{vovk2012conditional}. Although considered unfeasible \cite{romano2020malice} offers group conditional guarantees for disjoint groups by independently calibrating each group. 

\section{Estimate then Robust Optimize}
\label{sec:ETO}
The concept of \quoteIt{estimate then optimize}(ETO) comes from the contextual optimization literature (see \cite{sadana2023survey}). In this framework, the role of the \textbf{Estimation} process is to 
quantify the uncertainty about $\xi$ given the observed $\psi$. This is given as input to an \textbf{Optimization} problem that prescribes an optimal contextual decision $x^*(\psi)$.

When the downstream optimization problem is a CRO problem, the estimation step is required to produce a region that adapts to the observed covariates $\psi$ and is expected to contain the response $\xi$ with high confidence. This can be done indirectly by first calibrating a conditional distribution model $F_\theta(\psi)$ to the data, followed by an implied confidence region $\U_\theta(\psi)$ that satisfies $\Prob_{F_\theta(\psi)}(\xi\in\U_\theta(\psi))=1-\epsilon$. For e.g., when one assumes that $\xi |\psi \sim \mathcal{N}(\hat{\mu}(\psi), \hat{\Sigma}(\psi))$, one can learn $(\hat{\mu}(\psi), \hat{\Sigma}(\psi))$ by maximizing the log-likelihood function (see \cite{barratt2023covariance})
\[-\frac{n}{2} \log(2\pi) + \sum_{j=1}^n \log L(\psi)_{jj} - \frac{1}{2} \| L(\psi)^\top (\xi - \hat{\nu}(\psi)) \|_2^2\]
where $L(\psi)$ and $\hat{\nu}(\psi)$ are the parametric mappings that can be used to compose  
$\hat{\mu}(\psi) := L(\psi)^\top\nu(\psi)$ and $\hat{\Sigma}(\psi) = L(\psi)^{-\top}L(\psi)^{-1}$. Using the $\alpha$ quantile from the chi-squared distribution with $m$ degrees of freedom, one can define $\U_\theta(\psi)$ that satisfies $\Prob(\xi\in\U_\theta(\psi))=1-\epsilon$ asymptotically.

Some recent work completely circumvents the need for the $F_\theta$ intermediary by calibrating some $\U_\theta(\psi)$ directly on the dataset. For e.g. \cite{chenreddy2022data} propose identifying a k-class classifier, $a : \Re^m \rightarrow [K]$ to reduce $\mathcal{U}_{\theta}(\psi) := \mathcal{U}_{\theta}(a(\psi)) $ such that $\Prob(\xi\in\U_\theta(k)|a(\psi) = k) \geq 1-\epsilon \,\forall k$. The literature on conformal prediction belongs to the second type and separates the calibration of the shape of $\U(\psi)$ from the calibration of its size, parametrized by a radius $r>0$, on a reserved validation set in order to provide out-of-sample marginal coverage guarantees of the form $\Prob(\xi\in\U(\psi))\geq 1-\epsilon,$ where the probability is taking over both the draw of the validation set and of the next sample.

\section{End-to-End Conditional Robust Optimization} \label{sec:E2E}
While the ETO approach presented in the section \ref{sec:ETO} presents an efficient way to conditionally quantify the uncertainty, it does not take into account the quality of the decisions $x^*(\psi)$ that is prescribed by the downstream CRO model. 
%
In practice, the quality of a robust decision is usually assessed by measuring the risks associated to the cost produced on a new data sample (a.k.a. out-of-sample). We  assume  that this risk is measured by a risk measure that reflect the amount of risk aversion experienced by the DM. For instance, one can use conditional value-at-risk with $\rho_\alpha(X):=\inf_t t+ (1/(1-\alpha))E[(X-t)^+]$, which computes the expected value in the right tail of the random cost and covers both expected value and the worst-case cost as special cases (i.e. with $\alpha=0$ and $1$ respectively). Motivated by recent evidence (see \cite{elmachtoub2022smart}) indicating that performance improvement can be achieved by employing a decision-focused/task-based learning paradigm, we propose end-to-end conditional robust optimization.

\subsection{The  ECRO training problem}

Formally, we let $\Psi\subseteq\Re^m$ be an arbitrary support set for $\psi$ whereas $\Xi\subseteq\mathbb{R}^m$ 
is assumed for simplicity to be contained within a ball centered at 0 of radius $R_\xi$. We consider $c(x,\xi)$ to be convex in $x$ and concave in $\xi$ and let $\mathcal{X}(\psi):=\{x\in\Re^n|g(x,\psi)\leq 0,\,h(x,\psi)=0\}$ be a convex feasible set for $x$, possibly dependent on $\psi$, and defined through a set of convex inequalities, identified using $g:\Re^n\times\Re^m\rightarrow \Re^J$ and affine equalities, identified using an affine mapping $h:\Re^n\times\Re^m\rightarrow \Re^J$. The conditional optimal policy then becomes:
\begin{align}
    \label{eq:CRO_new}
    x^*(\psi, \U) := \arg\min_{x \in \mathcal{X}(\psi)} \max_{\xi \in \mathcal{U}(\psi)} c(x, \xi),
\end{align}
where we make explicit how the decision depends on both the contextual uncertainty set and the realized covariate. Given a parametric family of contextual uncertainty set $\U_\theta$ with $\theta\in \Theta$ and a dataset $D_{\psi\xi}:=\{(\psi^i,\xi^i)\}_{i=1}^{M}$, the ECRO training problem consists in identifying 
\begin{align}\label{eq:E2Eloss}
\min_{\theta\in\Theta} \mathcal{L}_{ECRO}(\theta) := \rho_{i\sim M}(c(x^*(\psi^i,\mathcal{U}_\theta),\xi^i)),
\end{align}
where for simplicity we assume $\rho(\cdot)$ to be a conditional value-at-risk measure, and $\U_\theta(\psi)$ to be ellipsoidal for all $\psi$. Namely, we can assume that 
\begin{align}\label{eq:ConSet}
    &\mathcal{U}_\theta(\psi) =\mathcal{E}(\mu_\theta(\psi),\Sigma_\theta(\psi), r)\\ \nonumber
    &\;:= 
    \{\ \xi\in\mathbb{R}^{\mxi} : (\xi - \mu_\theta(\psi))^T \Sigma_\theta(\psi)^{-1}(\xi - \mu_\theta(\psi)) \leq 1\}\,,
\end{align}
for some $\mu_\theta: \Re^m\rightarrow \Re^m$ and $\Sigma_\theta:\Re^m\rightarrow \mathcal{S}_+$, where $\mathcal{S}_+$ is the set of positive definite matrices, for all $\theta\in\Theta$. While the robust optimization literature suggests various uncertainty set structures that facilitates resolution of the RO problem,  the ellipsoidal set stands out as a natural one to employ as it retains numerical tractability (see \cite{ben1998robust}) and can easily be described to the DM.

\begin{figure}[!ht]
    \centering
    \tikzset{
      basic/.style  = {draw, text width=5em, drop shadow,  rectangle},
      root/.style   = {basic, thin, align=center,
                   fill=gray!45 , text width=5em},
      level 2/.style = {basic, thin,align=center, fill=gray!30,
                   text width=5em},
      level 3/.style = {basic, thin, align=left, fill=gray!20, text
      width=5em, node distance = 40pt} 
    }
    \begin{tikzpicture}[node distance=2cm]
    \begin{scope}[scale=0.65, transform shape]
\node (b1) [end] {Estimation};
\node (b2) [startstop, below of=b1] {Optimization};
\node (b3) [startstop, below of=b2]{Task loss};
\draw [arrow] (0,1.5) -- (b1) 
    node [midway,right] {$\Dxipsi$} node [midway,below] {};
\draw [arrow] (b1) -- (b2) node [midway,right] {$\U_{\theta}$};
\draw [arrow] (b2) -- (b3) node [midway,right] {$x^*(\cdot,\U_\theta)$};
\draw [arrow] (b3) -- (0,-5.5) node [midway,right] {$\U_{\theta^*},\,x^*(\cdot,\U_{\theta^*})$};
\draw [thick] (b3) -- (-2, -4);
\draw [thick] (-2,-4) -- (-2, 0) node [midway,left] {$\nabla_\theta \mathcal{L}_{ECRO}(\theta)$}; 
\draw [arrow] (-2,0) -- (b1);
    \end{scope}
\end{tikzpicture}\\
    \caption{Training pipeline for task-based learning  }
    \label{fig:TSpipeline}
    \end{figure}
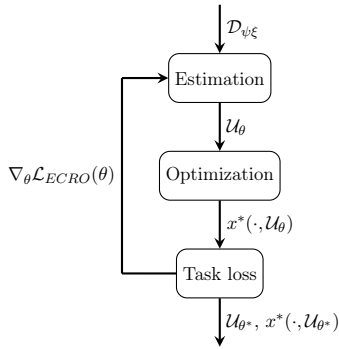

The training pipeline for the tasked-based learning approach is illustrated in figure \ref{fig:TSpipeline}. In this pipeline, one starts from an arbitrary $\theta^0$, the optimization problem \eqref{eq:CRO_new} is solved first for each data point, the resulting optimal actions are then implemented in order to measure the empirical risk under $D_{\psi\xi}$, which we call empirical ECRO loss of $\theta^0$. A gradient of $\mathcal{L}_{ECRO}(\theta)$ can then be used to update $\theta^0$ in a direction of improvement. 
Key steps in in this pipeline consists in computing $x^*(\psi^i,\U_{\theta})$ efficiently and in a way that enables differentiation with respect to $\theta$.

\subsection{Reducing and solving the robust optimization task}

Given the convex-concave structure of $c(x,\xi)$ and the convexity and compactness of the ellipsoidal set, we can employ Fenchal duality (see \cite{ben2015deriving}) to reformulate the min-max problem as a simpler minimization form over an augmented decision space. Specifically, we first replace the original cost function with the equivalent cost
\[\bar{c}(x,\xi):=\left\{\begin{array}{cl}c(x,\xi) & \mbox{if $\|\xi\|_2\leq R$}\\ -\infty & \mbox{otherwise}\end{array}\right.,\]
which integrates information about the domain of $\xi$.
One can then employ theorem 6.2 of \cite{ben2015deriving}, to show that problem \eqref{eq:CRO} can be reformulated as:
\begin{align}
\min_{x \in \mathcal{X}(\psi),v} f(x,v,\psi):=\delta^*(v|\mathcal{U}_\theta(\psi)) - \bar{c}_*(x, v)\label{eq:CROmin}
\end{align}
where the support function
\begin{align}
\delta^*(v|\mathcal{U}_{\theta}(\psi)) := \sup_{\xi \in \mathcal{U}_\theta(\psi)} \xi^Tv =  \mu^Tv+  \sqrt{v^T \Sigma^{-1} v}
\end{align}
while the partial concave conjugate function is defined as
\begin{align*}
\bar{c}_*(x, v) := \inf_{\xi}  v^T \xi - \bar{c}(x, \xi) =  \inf_{\xi : \|\xi\|_2\leq R}  v^T \xi - c(x, \xi)
\end{align*}
This leads to $x^*(\psi,\mathcal{U}(\psi))$ being the minimizer of the convex minimization problem: 
\begin{equation}\label{eq:xopt}
\min_{x \in \mathcal{X}(\psi),v} f(x,v,\psi)  
\end{equation}
with $f(x,v,\psi):=\mu^Tv+ \sqrt{v^T \Sigma^{-1} v} - \bar{c}_*(x, v)$,
a jointly convex function of $x$ and $v$ and finite valued over its domain, and with sub-derivatives:
\begin{align*}
&\nabla_x f(x,v,\psi)=\mu+(1/\sqrt{v^T \Sigma^{-1} v})v-\xi^*(x,v) \\
&\nabla_v f(x,v,\psi)=\nabla_xc(x,\xi^*(x,v)),
\end{align*}
%
where $\xi^*(x,v):=\argmin_{\xi : \|\xi\|_2\leq R}  v^T \xi - c(x, \xi)$. Revisiting the procedure outlined in Figure \ref{fig:TSpipeline}, one can observe that the training process requires a forward pass to find the optimal solutions and a backward pass to iteratively update the parameter vector $\theta$. This requires the computation of the gradients of the solution to the problem \ref{eq:E2Eloss} with respect to the input parameters which are passed through the reformulated CRO problem. 
Furthermore, the minimization procedure in problem \ref{eq:E2Eloss} entails navigating through the risk measure $\rho$. 
These aspects will be further explored in the next section.

\subsection{ Gradient for problem \eqref{eq:E2Eloss}}

In training problem \eqref{eq:E2Eloss}, the gradient of $\mathcal{L}_{ECRO}(\theta)$ with respect to $\theta$ can be obtained using the chain rule:
\begin{align*}
\allowdisplaybreaks
\nabla_\theta &\mathcal{L}_{ECRO}(\theta)= \sum_i \frac{\partial\rho_{i\sim M}(y_i)}{\partial y_i}\big|_{y_i=c(x^*(\psi^i,\mathcal{U}_\theta),\xi^i) } \cdot
 \nabla_{\x} c(\x)\big |_{\x=x^*(\psi^i,\mathcal{U}_\theta)} \cdot  \\
&\quad\left(\nabla_\mu x^*(\psi^i,\mathcal{E}(\mu,\Sigma_\theta(\psi^i)))\big|_{\mu=\mu_\theta(\psi^i)}\nabla_\theta\mu_{\theta}(\psi^i)\right.
\left.+ \nabla_\Sigma x^*(\psi^i,\mathcal{E}(\mu_\theta(\psi^i),\Sigma))\big|_{\Sigma=\Sigma_\theta(\psi^i)}\nabla_\theta\Sigma_{\theta}(\psi^i)\right).
\end{align*}
Based on \cite{doi:10.1137/1.9781611976595.ch6}, 
when $\rho(Y):= \mbox{CVaR}_\alpha(Y)$, one can employ the subdifferential:
\[\nabla_{\V{y}}\rho_{i\sim M}(y_i):=\V{\upsilon}(\V{y})\]
with $\V{\upsilon}(\V{y})\in\argmax_{\V{\upsilon}\in\Re^M_+:\1^T \V{\upsilon}=1, \V{\upsilon}\leq ((1-\alpha)N)^{-1}} \V{\upsilon}^T \V{y}$.

Given that $\nabla_{\x} c(\x)$, $\nabla_\theta\mu_{\theta}(\psi)$, and $\nabla_\theta\Sigma_{\theta}(\psi)$ can be readily obtained using Auto-Differentiation \cite{seeger2017auto} when $c(\x)$, $\mu_{\theta}(\psi)$, and $\Sigma_{\theta}(\psi)$ are differentiable, we focus the rest of this subsection on the process of identifying $\nabla_{(\mu,
\Sigma)}x^*(\psi,\mathcal{E}(\mu,\Sigma))$. Following the decision-focus learning literature (see \cite{blondel2022efficient}), one can identify such derivatives by exploiting the fact that any optimal primal dual pair $(x^*, v^*, \lambda^*, \nu^*)$ of problem \eqref{eq:xopt} must satisfy the Karush-Kuhn-Tucker (KKT) conditions, which take the form:
\begin{align*}
&G(x^*, v^*, \lambda^*, \nu^*, \mu, \Sigma,\psi) =0,\qquad g(x^*,\psi) \leq 0, \lambda^* \geq 0 .
\end{align*}
where
\begin{align*}
&G(x^*, v^*, \lambda^*, \nu^*,\mu,\Sigma,\psi) := \begin{bmatrix} \nabla_x f(x^*,v^*,\psi ) + \nabla_x g(x^*, \psi)^T \lambda^* + \nabla_x h(x^*, \psi)^T \nu^* \\ \lambda^* \circ g(x^*,\psi) \\ h(x^*,\psi) \end{bmatrix}
\end{align*}
and \(\circ\) denotes the Hadamard product of two vectors.

One can therefore apply implicit differentiation to the constraints $G(x^*, v^*, \lambda^*, \nu^*,\mu,\Sigma,\psi)=0$ to identify $\nabla_{(\mu,\Sigma)} x^*(\psi,\mathcal{E}(\mu,\Sigma))$ simultaneously with the derivatives of $v^*$, $\lambda^*$, and $\nu^*$ with respect to the pair $(\mu,\Sigma)$. 
Specifically, one is required to solve the system of equations:
\begin{align*}
&\frac{\partial}{\partial x, v, \lambda, \nu} G(x^*,v^*, \lambda^*, \nu^*, \mu,\Sigma,\psi) \cdot 
\frac{\partial}{\partial (\mu,\Sigma)} (x^*, v^*, \lambda^*, \nu^*)(\mu,\Sigma) = - \frac{\partial}{\partial (\mu,\Sigma)} G(x^*, v^*,\lambda^*, \nu^*, \mu,\Sigma,\psi),
\end{align*}
where \(\frac{\partial}{\partial (x, v, \lambda, \nu)} G\) denotes the Jacobian of the mapping \(G\) with respect to \((x, v, \lambda, \nu)\). We refer to \cite{blondel2022efficient} and \cite{duvenaud2020deep} for further details on the computations of related to implicit differentiation.

\subsection{Task-based Set (\ECRO{}) Algorithm}
In this section, we delve into implementation details of the ECRO training pipeline. Regarding the contextual ellipsoidal set $\mathcal{E}(\mu_\theta(\psi),\Sigma_\theta(\psi))$, we follow the ideas proposed in \cite{barratt2023covariance} and employ a neural network that maps from $\network_\theta:\Re^m \rightarrow \Re^m\times \Re^{m(m+1)/2}\times\Re$. The first set of outputs is used to define $\mu_\theta(\psi)$ while the second and third set forms a lower triangular matrix $L_\theta(\psi)$ and scalar $r_\theta(\psi)$, which is made independent of $\psi$ w.l.o.g., used to produce $\Sigma_\theta(\psi):=r_\theta(\psi)L_\theta(\psi)L_\theta(\psi)^T$. The positive definiteness of $\Sigma_\theta(\psi)$ is ensured by taking an exponential in the last layer of the network for the output that appear in the diagonal of $L$.
The architecture of the neural network can be found in appendix \ref{architecture}. 

The second set of notable details have to do with solving for $x^*(\psi^i, \mathcal{E}(\mu^i_{\theta},\Sigma^i_{\theta},r_{\theta}))\ \forall i$. In our implementation of end-to-end learning for conditional robust optimization, we found that a trust region optimization (TRO) method \cite{byrd2000trust} could efficiently solve the reformulated robust optimization problem \eqref{eq:xopt} and provide primal dual solution pairs for this problem. Given that each episode of the training would pass through the same set of data points, we further observed that the training accelerated significantly (see figure in Appendix \ref{fig:covcomp}) when the trust region was interrupted early (after $K=5$ iterations) as long as it would be warm started at the solution found at the previous epochs. Algorithm \ref{alg:ecrotraining} presents our proposed training framework for the ECRO approach.

\begin{algorithm}
  \caption{ECRO Training with Trust Region Solver}\label{alg:ecrotraining}
  \begin{algorithmic}[1]
  \State \textbf{input}: dataset $\Dxipsi$, max epochs $T$, max TRO steps $K$, batch size $N$, protection level $\alpha$
  \State Initialize a warm start  buffer $\{\bar{x}_1,\dots,\bar{x}_M\}$ with each $\bar{x}^i\in\mathcal{X}(\psi_i)$
  \State Initialize network parameters $\theta$ and $t=1$
      \While{not converged and ($t \le T$)}
        \State Sample a batch of $N$ indices $\mathcal{B}\subset \{1,\dots,M\}$
         \For{$i\in \mathcal{B}$}
         \State //Run TRO for up to $K$ steps
        \State $x_i^t, \lambda_i^t, \nu_i^t \gets$ \textsc{TRO}($\bar{x}_{i}$, $\mu_{\theta}(\psi_{i}),\Sigma_\theta(\psi_{i}), K)$
        \State $\bar{x}_i\gets x_i^t$ \Comment{Update warm start}
               
      \EndFor
        \State \textbf{compute} $\mathcal{L}_{ECRO}(\theta)$ and $\nabla_\theta\mathcal{L}_{ECRO}(\theta)$ for $i\sim\mathcal{B}$
        \State $\theta \gets \theta - \mbox{step size}\cdot\nabla_\theta\mathcal{L}_{ECRO}(\theta)$ 
        
      \EndWhile\label{euclidendwhile}
      
      \State \textbf{return} $\theta$
  \end{algorithmic}
\end{algorithm}

\section{End-to-End CRO with Conditional Coverage}

Recall that the ETO framework summarized in section \ref{sec:ETO} focused on producing contextual uncertainty set with appropriate marginal coverage (of $1-\epsilon$) of the realization of $\xi$. The training pipeline in section \ref{sec:E2E} was at the other end of the spectrum, disregarding entirely the objective of coverage to increase task performance. In practice, coverage can be a heavy price to pay to obtain performance as it implies a loss in the explainability of the prescribed robust decision. It is becoming apparent that many DM suffer from algorithm aversion (see \citep{burton2020systematic}) and could be reluctant to implementing a robust decision produced from an ill covering uncertainty set. 

We further argue that traditional ETO might already face resistance to adoption given the type of coverage property attributed to the ETO sets, i.e. $\Prob(\xi\in\U(\psi))=1-\epsilon$. Indeed, 
marginal coverage guarantees only hold in terms of the joint sampling of $\psi$ and $\xi$. This implies that it offers no guarantees regarding the coverage of $\xi$ given the observed $\psi$ for which the decision is made. In fact, a 90\% marginal coverage can trivially be achieved using $\Xi$ when $\psi\in\Psi$ and $\emptyset$ otherwise, as long as $\Prob(\psi\in\Psi)=1-\epsilon$. This is clearly an issue for applications with critical safety considerations and motivates 
seeking conditional coverage in addition to the marginal coverage when designing $\mathcal{U}(\psi)$. In this section, we outline a training procedure that integrates a sub-procedure that enhances the conditional coverage performance. 


\subsection{The conditional coverage training problem}

We start by briefly formalizing the difference between the two types of coverage in the definition below. 
\begin{definition}
    Given a confidence level $1-\epsilon$, a contextual uncertainty set mapping $\mathcal{U}(\cdot)$ is said to satisfy \textbf{marginal coverage} if $\Prob(\xi\in\mathcal{U}(\psi))=1-\epsilon$, and to satisfy \textbf{conditional coverage} if $\Prob(\xi\in\mathcal{U}(\psi)|\psi)=1-\epsilon$ almost surely.
\end{definition}
The following lemma identifies a necessary and sufficient condition for a contextual set to satisfy conditional coverage.
\begin{lemma}\label{thm:lossEquiv}
    A contextual uncertainty set $\U(\psi)$ satisfies conditional coverage, at confidence $1-\epsilon$, if and only if
   \[\mathcal{L}_{\mymbox{CC}}(\theta):=\Expect[\,\left(\Prob(\xi \in \U(\psi)|\psi)-(1-\epsilon)\right)^2\,]=0\]
\end{lemma}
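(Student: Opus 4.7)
The plan is to observe that this lemma is essentially an application of the standard fact that a non-negative random variable has expectation zero if and only if it is almost surely zero. The quantity inside the expectation, $(\Prob(\xi \in \U(\psi)|\psi)-(1-\epsilon))^2$, is a measurable non-negative function of $\psi$, so both directions fall out of this basic measure-theoretic observation.

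First I would handle the forward direction. Assume $\U(\psi)$ satisfies conditional coverage at confidence $1-\epsilon$, so that $\Prob(\xi \in \U(\psi)\mid \psi) = 1-\epsilon$ almost surely. Then the random variable $\Prob(\xi \in \U(\psi)\mid \psi) - (1-\epsilon)$ is zero almost surely, hence so is its square, and the expectation of an almost-surely-zero non-negative random variable is zero, giving $\mathcal{L}_{\mymbox{CC}}(\theta)=0$.

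Next I would handle the converse. Suppose $\mathcal{L}_{\mymbox{CC}}(\theta) = \Expect[(\Prob(\xi \in \U(\psi)\mid \psi)-(1-\epsilon))^2] = 0$. Since the integrand is a non-negative measurable function of $\psi$, the zero-expectation condition forces $(\Prob(\xi \in \U(\psi)\mid \psi)-(1-\epsilon))^2 = 0$ almost surely. Taking square roots (both branches give $0$ here) yields $\Prob(\xi \in \U(\psi)\mid \psi) = 1-\epsilon$ almost surely, which is precisely the definition of conditional coverage.

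There is no real obstacle here; the only thing worth being slightly careful about is that $\Prob(\xi \in \U(\psi)\mid \psi)$ is well-defined as a measurable function of $\psi$ (taken as a version of the regular conditional probability), so that the squared deviation is a legitimate non-negative random variable to which the $\Expect[Y]=0 \Leftrightarrow Y=0\text{ a.s.}$ principle applies. Given that, both implications are immediate and the proof is essentially two lines in each direction.
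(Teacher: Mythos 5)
Your proposal is correct and follows essentially the same argument as the paper: both directions reduce to the fact that a non-negative random variable has zero expectation if and only if it vanishes almost surely, applied with $X:=\Prob(\xi\in\U(\psi)\mid\psi)$. Your added remark about working with a measurable version of the conditional probability is a minor refinement but does not change the substance of the argument.
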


\begin{proof}
For any random variable $X$, one can show that :
\begin{align*}
X&=1-\epsilon \mbox{ a.s} \\
&\;\Rightarrow\;\Expect[(X-(1-\epsilon))^2]=1\cdot(1-\epsilon-(1-\epsilon))^2=0
\end{align*}    
\qquad \quad and that, since $y^2\leq 0 \Leftrightarrow y=0$,
\begin{align*}
\Expect&[(X-(1-\epsilon))^2]=0\\
&\;\Rightarrow\; (X-(1-\epsilon))^2 = 0 \mbox{ a.s.} \;\Rightarrow\;X = 1-\epsilon \mbox{ a.s.}.
\end{align*}    
\qquad \quad By letting $X:=\Prob(\xi\in\U_\theta(\psi)|\psi)$, we obtain our result.
\vskip -0.2in
\end{proof}
Equipped with Lemma \ref{thm:lossEquiv}, we formulate the \quoteIt{theoretical} conditional coverage training problem as $ \min_{\theta\in\Theta}\;\;\mathcal{L}_{CC}(\theta)$.
Since the true conditional distribution $\Prob(\xi\in\U_\theta(\psi)|\psi)$ is typically inaccessible to the DM, we propose an approximation that will make  $\mathcal{L}_{CC}(\theta)$ practical.

\subsection{Regression-based Conditional Coverage Loss}

Given a set $\U$, one can define a  binary random variable $y(\psi,\xi,\U) := \1\{\xi \in \U(\psi)\}$, and rewrite the conditional probability distribution $\Prob(\xi\in\U(\psi)|\psi)$ as $\Prob(y(\psi,\xi,\U) = 1|\psi)$. Using the i.i.d sample data in $\Dxipsi$, one can approximate this conditional probability using a parametric model, i.e.  $\Prob(y(\psi,\xi,\U) |\psi)\approx g_{\phi}(\psi)$ for some $\phi\in\Phi$. The parameters $\phi$ can be calibrated by minimizing the negative conditional log-likelihood of  $\{y(\psi^i,\xi^i,\U)\}_{i=1}^M$:
\begin{align} \label{eq:NLL}
    \phi^*(\U):=\arg\min_{\phi\in\Phi} -\frac{1}{M} \sum_{i=1}^{M} \log g_{\phi}(\psi^i)^{y^i}(1-g_{\phi}(\psi^i))^{1-y^i},
\end{align}
where $y_i:=y(\psi^i,\xi^i,\U)$.
Using the parametric approximation $g_{\phi^*(\U)}(\psi) \approx \Prob(\xi \in \U(\psi)|\psi)$ and replacing the unknown true distribution of $(\psi,\xi)$ with the empirical one, we obtain our regression-based conditional coverage loss function 
\begin{align*}\label{eq:coverageLoss}
\hat{\mathcal{L}}_{CC}(\theta) := \mathbb{E}^{\mathcal{\Dxipsi}}[(g_{\phi^*(\U_\theta)}(\psi) - (1-\epsilon))^{2}].
\end{align*}

The gradient of $\hat{\mathcal{L}}_{CC}(\theta)$ can be obtained using similar decision-focused training methods as employed for $\mathcal{L}_{ECRO}(\theta)$ given that:
\begin{align*}
\nabla_\theta&\hat{\mathcal{L}}_{CC}=\sum_{i=1}^M 2(g_{\phi^*(\U_\theta)}(\psi^i) - (1-\epsilon))\nabla_\phi g_{\phi^*(\U_\theta)}(\psi^i)\cdot 
\sum_{j=1}^M \partial \phi^*(\mathcal{E}(\mu,\Sigma_\theta(\psi^i))) / \partial y^j\cdot \\
&\left(\nabla_\mu y^j(\psi^j,\xi^j,\mathcal{E}(\mu,\Sigma_\theta(\psi^j)))\big|_{\mu=\mu_\theta(\psi^j)}\nabla_\theta\mu_{\theta}(\psi^j)\right.
\left.+ \nabla_\Sigma y^j(\psi^j,\xi^j,\mathcal{E}(\mu_\theta(\psi^j),\Sigma))\big|_{\Sigma=\Sigma_\theta(\psi^j)}\nabla_\theta\Sigma_{\theta}(\psi^j)\right),
\end{align*}
where the main challenges reside again in the step of differentiating through the minimizer of problem \eqref{eq:NLL}.

\subsection{Dual Task based Set (\EECRO{}) algorithm}
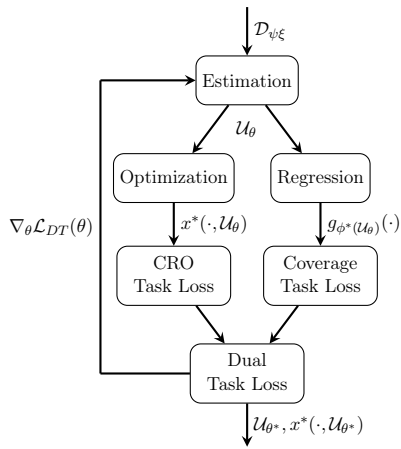
\begin{figure}[!ht]
    \centering
    \tikzset{
      basic/.style  = {draw, text width=5em, drop shadow,  rectangle},
      root/.style   = {basic, thin, align=center,
                   fill=gray!45 , text width=5em},
      level 2/.style = {basic, thin,align=center, fill=gray!30,
                   text width=5em},
      level 3/.style = {basic, thin, align=left, fill=gray!20, text
      width=5em, node distance = 40pt} 
    }
    \begin{tikzpicture}[node distance=2cm]
    \begin{scope}[scale=0.65, transform shape]
\node (b1) [end] {Estimation};
\node (b2) [startstop, below of=b1, xshift= -1.5cm] {Optimization};
\node (b3) [startstop, below of=b2]{\begin{tabular}{c} CRO \\ Task Loss \end{tabular}};
\node (b5) [startstop, right of=b2, xshift= 1cm] {Regression};
\node (b6) [startstop, right of=b3, xshift= 1cm]{\begin{tabular}{c} Coverage \\ Task Loss \end{tabular}};
\node (b7) [startstop, below of=b3, xshift= 1.5cm]{\begin{tabular}{c} Dual \\ Task Loss \end{tabular}};
\draw [arrow] (0,1.5) -- (b1) 
    node [midway,right] {$\Dxipsi$} node [midway,below] {};
\draw [arrow] (b1) -- (b2) node [midway,right, xshift= 0.4cm] {$\U_{\theta}$};
\draw [arrow] (b1) -- (b5);
\draw [arrow] (b5) -- (b6) node [midway,right]  {$g_{\phi^*(\U_\theta)}(\cdot)$};

\draw [arrow] (b2) -- (b3) node [midway,right] {$x^*(\cdot,\U_\theta)$};
\draw [arrow] (b3) -- (b7) node [midway,left] {};
\draw [arrow] (b6) -- (b7);
\draw [thick] (b7) -- (-3, -6);
\draw [thick] (-3,-6) -- (-3, 0) node [midway, left] {$\nabla_\theta \mathcal{L}_{DT}(\theta)$}; 
\draw [arrow] (-3,0) -- (b1);
\draw [arrow] (b7) -- (0,-7.5) node [midway, right] {$\U_{\theta^*}, x^*(\cdot,\U_{\theta^*})$};
   \end{scope}
\end{tikzpicture}\\
    \caption{Training pipeline for dual task based learning  }
    \label{fig:TSCovpipeline}\vskip -0.1in
    \end{figure}
We conclude this section with the presentation of our novel integrated algorithm that learns the contextual uncertainty set network $\network_\theta$ by incorporating both the risk mitigation and conditional coverage tasks in the training. 
Indeed our \EECRO{} training algorithm minimizes the following double task loss function that trades off between the two task objectives: 
\begin{equation}
\mathcal{L}_{DT}(\theta) = \gamma \mathcal{L}_{ECRO}(\theta)+(1-\gamma)\hat{\mathcal{L}}_{CC}(\theta)\label{eq:jointLoss}
\end{equation}

The training pipeline for this algorithm can be seen in figure \ref{fig:TSCovpipeline}. It closely mirrors the structure of the \ECRO{} algorithm, with additional crucial steps to compute the necessary components of the loss presented in \ref{eq:jointLoss}. Within each epoch, the predicted uncertainty set $\U_\theta$ serves two purposes: i) Optimizing CRO to find the optimal policy $x^*(\cdot,\U_\theta)$ and assessing its associated risk; and ii) producing the binary variable $y(\psi,\xi,\U_\theta)$, which regression leading to $g_{\phi^*(\U_\theta)}(\cdot)$ serves to quantify the quality of the conditional coverage. 
The sum of task losses produce $\mathcal{L}_{DT}(\theta)$, which can be differentiated using decision-focused learning methods. The regression model $g_\phi(\psi)$ take the form of a feed-forward neural network with a sigmoid activation in the final layer and optimized using stochastic gradient descent. Algorithm \ref{alg:dualecrotraining} in appendix \ref{algos} presents the details of this \EECRO{} algorithm.

\begin{remark}
It is to be noted that achieving distribution-free finite sample conditional coverage guarantees is known to be impossible in the conformal prediction literature (see \cite{barber:limitsCondConf}). Recently, some progress has been made towards partial forms of conditional coverage guarantees (see \cite{gibbs2023conformal}) yet it is unclear what are the implications of exploiting such partial coverage properties for the downstream CRO decisions. It is also unclear how such conditional conformal prediction procedures could be integrated within and end-to-end CRO approach.
\end{remark}
\begin{figure*}  [htbp]
\centering
\subfloat[]{
  \includegraphics[width=0.34\textwidth]{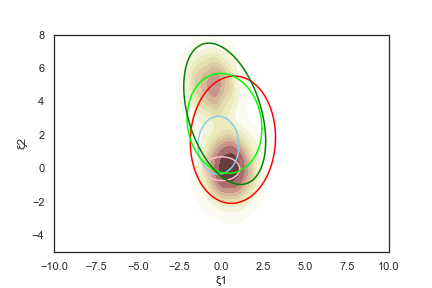}
}\hspace{-1.5em}%
\subfloat[]{
  \includegraphics[width=0.34\textwidth]{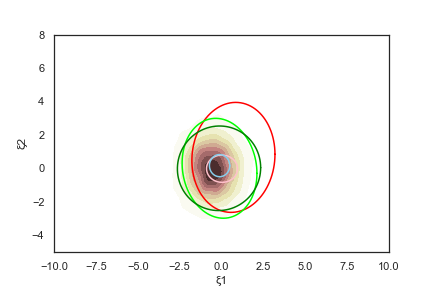}
}\hspace{-1.5em}%
\subfloat[]{
  \includegraphics[width=0.34\textwidth]{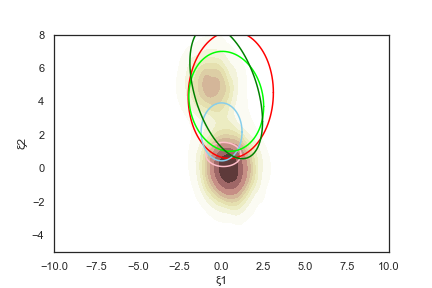}
}\\
\legendsquare{lgreen}~ \begin{tiny} \CROCS{} \end{tiny} \quad 
 \legendsquare{dgreen}~ \begin{tiny} \CROCCS{} \end{tiny} \quad  
\legendsquare{lblue}~\begin{tiny} \CROES{} \end{tiny}\quad
\legendsquare{lred}~ \begin{tiny} \ECRO{} \end{tiny} \quad
\legendsquare{rred}~ \begin{tiny} \EECRO{} \end{tiny}
\caption{Comparison of uncertainty set ($\alpha$ = 0.9) coverage for different $\psi$ realizations: (a) $[2.5, -0.2]^T$, (b) $[-2.6, 0.5]^T$, (c) $[2.7, 1.9]^T$. The shade indicate the true conditional distribution.\label{plot:covcomp}}\vskip -0.2in
\end{figure*}
\section{Experiments}
This section outlines our experimental framework devised to demonstrate the advantages of the ECRO method in learning the uncertainty sets tailored to covariate information. Our focus lies in assessing the utility of the model in: i) improving the CRO performance; and ii) achieving conditional coverage. We conduct a comparative analysis between our two end-to-end approaches, \ECRO{} and \EECRO{}, and three state-of-the-art ETO approaches to formulate contextual ellipsoidal sets. We first consider a Distribution-based contextual ellipsoidal uncertainty Set (\CROES{}) recently introduced in \cite{blanquero2023contextual}, where the conditional distribution of $\xi$ given $\psi$ is presumed to follow a multivariate normal distribution. Additionally, we explore two distributional-free approaches. A vanilla Conformal Prediction Set (\CROCS{}) uses conformal prediction on the output of a point predictor for $\xi$ given $\psi$, after shaping the ellipsoid (through an invariant $\Sigma$) using the residual errors \cite{johnstone2021conformal}. An Adapted version of Conformal Prediction Set (\CROCCS{}) (proposed in \cite{messoudi2022ellipsoidal}) adapts the shape $\Sigma$ using local averaging around the observed $\psi$. 

\begin{figure*}[!ht]
\centering
\subfloat[\begin{tiny}2017\end{tiny}]{
  \includegraphics[width=0.34\textwidth]{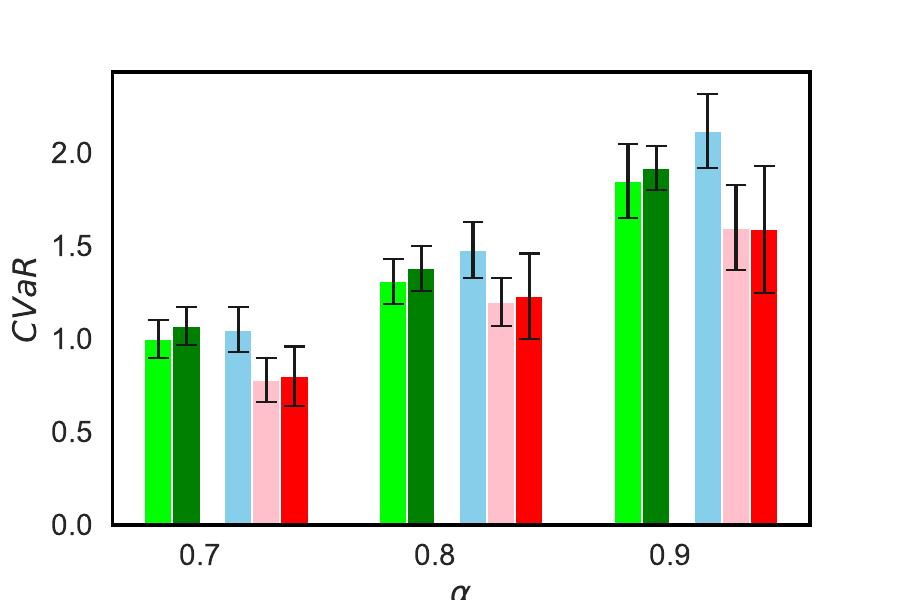}
}\hspace{-1.5em}%
\subfloat[\begin{tiny}2018\end{tiny}]{
  \includegraphics[width=0.34\textwidth]{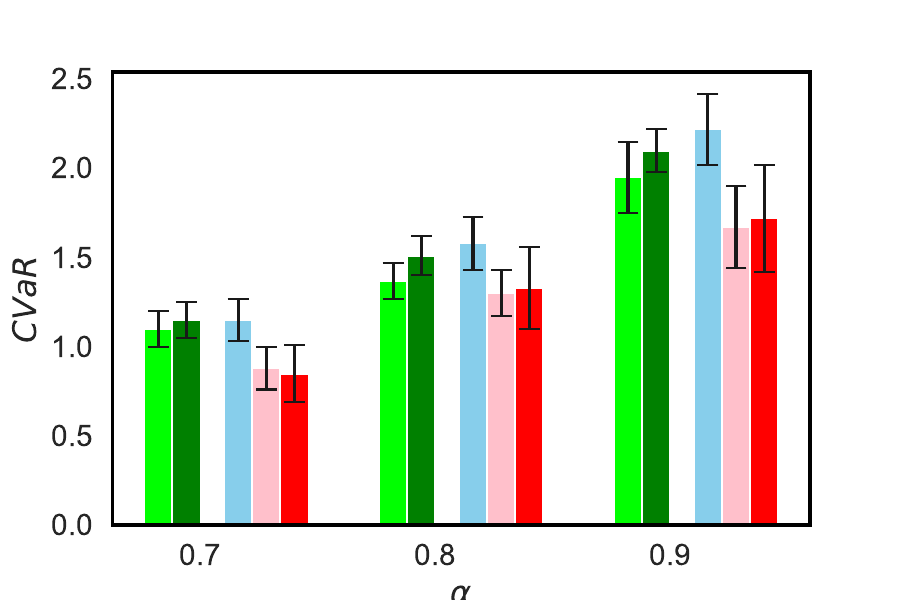}
}\hspace{-1.5em}%
\subfloat[\begin{tiny}2019\end{tiny}]{
  \includegraphics[width=0.34\textwidth]{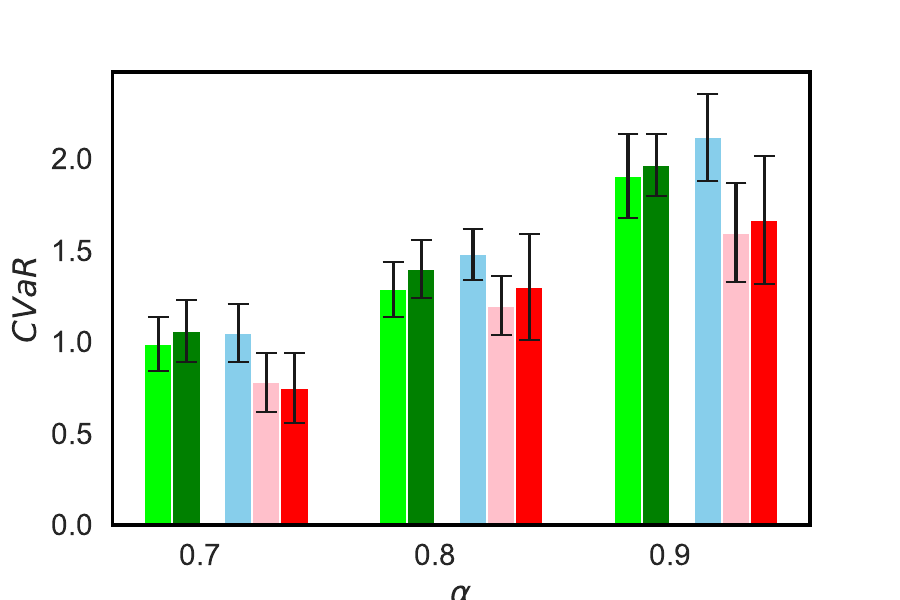}
}\\
\legendsquare{lgreen}~ \begin{tiny} \CROCS{} \end{tiny} \quad 
 \legendsquare{dgreen}~ \begin{tiny} \CROCCS{} \end{tiny} \quad  
\legendsquare{lblue}~\begin{tiny} \CROES{} \end{tiny}\quad
\legendsquare{lred}~ \begin{tiny} \ECRO{} \end{tiny} \quad
\legendsquare{rred}~ \begin{tiny} \EECRO{} \end{tiny}
\caption{Avg. CVaR of returns across 10 portfolio trajectory simulations. Error bars report 95\% CI.}\label{results:portfolio}
\end{figure*}
\subsection{The portfolio optimization application}

We explore the effectiveness of proposed methodologies in addressing a classic robust portfolio optimization problem. In this context, we define the cost function $c(x,\xi)$ as $-\xi^Tx$, where $x$ represents a portfolio comprising investments in $m$ different assets, with their respective returns denoted in the random vector $\xi$. Additionally, we impose constraints on $x$, encapsulated within $\mathcal{X}$, defined as $\mathcal{X} := \{ x \in \Re^m | \sum_{i = 1}^{m}x_i = 1, x \geq 0\}$. For this cost function, we obtain the partial concave conjugate function:
\begin{align}
\bar{c}_*(x, v) = \inf_{\xi : \|\xi\|_2\leq R}  v^T \xi - \xi^T x= -R\|v-x\|_2
\end{align}
Thus leading to problem \ref{eq:xopt} becoming
\begin{equation}
\begin{aligned}
\label{eq:OptPortfolioreformed2}
&\min_{x \in \mathcal{X}} f(x,\psi):=x^T\mu(\psi)+  \sqrt{x^T \Sigma(\psi)^{-1} x}
\end{aligned}
\end{equation}
when $R\rightarrow \infty$, thus capturing $\Xi:=\Re^m$.

\subsection{CRO performance using synthetic data}


We first consider a simple synthetic experiment environment where $m=2$ and where the pair $(\psi,\xi)$ is drawn from a mixture of three 4-d multivariate normal distributions. 
We sample N= 2000 observations and use 600 observations to train, 400 as validation, and 1000 observations for testing. All our results present statistics that are based on 10 simulations, each of which employed a slightly modified mixture model (see \href{https://anonymous.4open.science/r/End-to-end-CRO-513E/README.md}{github repository} for details). The \ECRO{} and \EECRO{} algorithms leverage deep neural networks with the corresponding task losses to learn the necessary components ($\mu_\theta(\psi),\Sigma_\theta(\psi)$) of $\mathcal{U}_\theta(\psi)$. 
All sets are calibrated for a probability coverage of 90\% and the risk of decisions is measured using CVaR at risk level $\alpha=0.9$. The average CVaR objective values and marginal coverages of the uncertainty sets can be found in the table \ref{tab:cvarsample}. 
One can notice that the end-to-end based methods, \ECRO{} and \EECRO{} significantly outperform the ETO methods on the CVaR performance. It appears that in order to maintain the required marginal coverage, the ETO approaches learned sets that resulted in overly conservative RO solutions. 

\begin{figure}[!t]
\centering
{
  \includegraphics[width=0.7\textwidth]{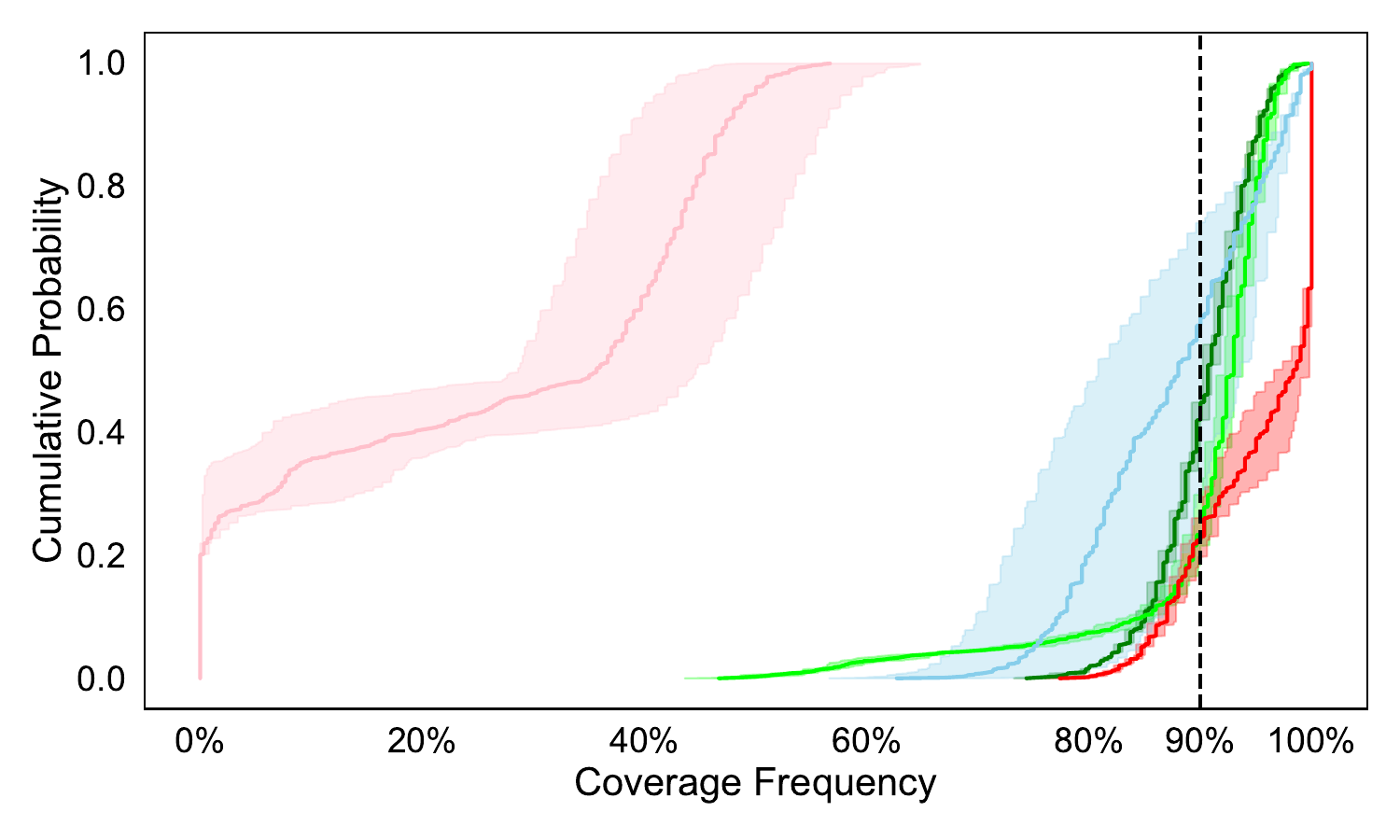}\\
}
\legendsquare{lgreen}~ \begin{tiny} \CROCS{} \end{tiny} \quad 
 \legendsquare{dgreen}~ \begin{tiny} \CROCCS{} \end{tiny} \quad  
\legendsquare{lblue}~\begin{tiny} \CROES{} \end{tiny}\quad
\legendsquare{lred}~ \begin{tiny} \ECRO{} \end{tiny} \quad
\legendsquare{rred}~ \begin{tiny} \EECRO{} \end{tiny}
\caption{Average cumulative distribution of conditional coverage frequency when $\psi$ is sampled uniformly from dataset over 10 simulated environments. Shaded region represent 90\% CI}\label{plot:cumplot}\vskip -0.1in
\end{figure}

\begin{table}[!ht]
\begin{center}
\scalebox{0.85}{
\begin{sc}
\begin{tabular}{lcccr}
\toprule
        & CVaR  &  Marginal Coverage \\
        \midrule
        \CROCS{} & $1.58 \pm 0.03$ & $91 \pm 1.8\%$   \\
        
        \CROCCS{} & $1.69 \pm 0.05$ & $91 \pm  1.4\%$  \\
        
        \CROES{} & $1.64 \pm 0.07$  &$85 \pm  7.8\%$ \\
        
        \ECRO{} & $1.03 \pm 0.10$ & $23 \pm  6.1\%$ \\
        
        \EECRO{} & $1.08 \pm 0.13$ &$92\pm  1.5\% $\\
        
\bottomrule
\end{tabular}
\end{sc}
}
\end{center}
\vskip -0.1in
\caption{Avg. CVaR and marginal coverage for $\alpha =1-\epsilon= 0.9$ over 10 simulated environments, error represent 90\% CI. \label{tab:cvarsample}}
\vskip -0.2in
\end{table}

Additionally, all the models except \ECRO{} appear to have the marginal coverage $~ 90\%$ which corresponds to the $\alpha$ level they are trained for. By disregarding the aspect of coverage, \ECRO{} was able to improve on the CVaR task but suffers poorly when it comes to coverage. Comparatively, the dual task based approach \EECRO{} 
was able to improve on the CVaR performance over the ETO approaches while still maintaining the necessary coverage.

As pointed out earlier, conditional coverage is a highly desirable property. Given that a synthetic environment gives us access to exact measurements of conditional coverage,  
figure \ref{plot:cumplot} presents the cumulative distribution of the observed conditional coverage frequencies when $\psi$ is sampled uniformly from the data set.
One can notice from the plot that \CROES{}, despite being closer to required marginal coverage, failed to provide accurate conditional coverage. Among the methods that use conformality score to calibrate the radius, \CROCCS{} method which uses localized covariance matrices has better conditional coverage. However, this comes at the price of CVaR performance. The advantage of the dual task-based approach, \EECRO{}, over the single task one are obvious. While \EECRO{} appears to have overshot the coverage compared to \CROCCS, which aligns closer to 90\%,  we argue that this is not an issue as it ends up providing more coverage than needed while generating nearly the best average CVaR value. 
In figure \ref{plot:covcomp} which overlays the various sets learned on the conditional distribution of $\xi$, one can notice that the sets adapt to the covariate information $\psi$ to provide the necessary conditional coverage.

\subsection{CRO using US stock data}\label{portfolioexp}

\textbf{Dataset and experimental Design} We follow the experimental design methodology proposed in \cite{chenreddy2022data}. Our experiments utilize historical US stock market data, comprising adjusted daily closing prices for 70 stocks across 8 sectors from January 1, 2012, to December 31, 2020, obtained via the Yahoo!Finance's API. Each year contains 252 data points, and we calculate percentage gain/loss relative to the previous day to construct our dataset, denoted as $\xi$. We incorporate trading volume of individual stocks and other market indices
as covariates. We test the robustness of all the models performance by solving the portfolio optimization problem on randomly selected stock subsets across different time spans. Utilizing 15 stocks in each window, we run the experiment ten times over three moving time frames. We maintain consistent parameters (learning rate $lr$, number of epochs $T$, step size $K$, $\gamma$). Further implementation and parameter tuning details can be found in Appendix \ref{sec:portfolio}. Figure \ref{results:portfolio} compares the avg. CVaR of returns and Table \ref{tab:marginal_coverage} presents the marginal coverage across difference confidence levels for models.

It is evident from the CVaR comparison that the task based methods \ECRO{} and \EECRO{} consistently performs better over the ETO models. Among ECRO approaches, we can clearly observe an advantage for \EECRO{}  over  \ECRO{}, which has on par CVaR performance while having out of sample marginal coverage closer to the expected target level. Conformal-based ETO methods have a good marginal coverage as they are designed to have the desired coverage. 
Especially, \CROCCS{} and \CROCS{}, being calibrated using conformal prediction which produce statistically valid prediction regions have near target coverage levels. 

\begin{table}[!ht]
\begin{center}
\scalebox{0.8}{
\begin{sc}
\begin{tabular}{lclll}
\hline
Model & \multicolumn{1}{l}{Year} & \multicolumn{3}{c}{Marginal cov. (\%)} \\ \cline{3-5} 
 &  & \multicolumn{3}{c}{Target $1-\epsilon$} \\ 
 & \multicolumn{1}{l}{} & 70\% & 80\% & 90\% \\ \hline
\CROCS{} & \multirow{5}{*}{2017} & 68 & 78 & 87 \\
\CROCCS{} &  & 68 & 77 & 89 \\
\CROES{} &  & 54 & 72 & 85 \\
\ECRO{} &  & 22 & 26 & 28 \\
\textbf{\EECRO{}} &  & \textbf{72} & \textbf{79} & \textbf{88} \\ \hline
\CROCS{} & \multirow{5}{*}{2018} & 67 & 79 & 88 \\
\CROCCS{} &  & 68 & 78 & 87 \\
\CROES{} &  & 59 & 75 & 87 \\
\ECRO{} &  & 23 & 24 & 29 \\
\textbf{\EECRO{}} &  & \textbf{71} & \textbf{80} & \textbf{93} \\ \hline
\CROCS{} & \multirow{5}{*}{2019} & 69 & 78 & 88 \\
\CROCCS{} &  & 71 & 78 & 89 \\
\CROES{} &  & 61 & 76 & 86 \\
\ECRO{} &  & 26 & 30 & 32 \\
\textbf{\EECRO{}} &  & \textbf{69} & \textbf{78}& \textbf{92} \\ \hline
\end{tabular}
\end{sc}
}
\end{center}
\caption{Marginal Coverage}
\label{tab:marginal_coverage}
\end{table}

\section{Conclusion} 
In summary, the paper introduces a novel framework for conditional robust optimization by combining machine learning and optimization techniques in an end-to-end approach. The study focuses on enhancing the conditional coverage of uncertainty sets and improving CRO performance. Through comparative analysis and simulated experiments, the proposed methodologies show superior results in robust portfolio optimization. The findings point to the importance of uncertainty quantification and highlight the effectiveness of an end-to-end approach in risk averse decision-making under uncertainty.

\bibliography{references}

\begin{thebibliography}{37}
\providecommand{\natexlab}[1]{#1}
\providecommand{\url}[1]{\texttt{#1}}
\expandafter\ifx\csname urlstyle\endcsname\relax
  \providecommand{\doi}[1]{doi: #1}\else
  \providecommand{\doi}{doi: \begingroup \urlstyle{rm}\Url}\fi

\bibitem[Amos and Kolter(2017)]{amos2017optnet}
Brandon Amos and J~Zico Kolter.
\newblock Optnet: Differentiable optimization as a layer in neural networks.
\newblock In \emph{International Conference on Machine Learning}, pages
  136--145. PMLR, 2017.

\bibitem[Barber et~al.(2020)Barber, Candès, Ramdas, and
  Tibshirani]{barber:limitsCondConf}
Rina Barber, Emmanuel Candès, Aaditya Ramdas, and Ryan Tibshirani.
\newblock The limits of distribution-free conditional predictive inference.
\newblock \emph{Information and Inference: A Journal of the IMA}, 10, 08 2020.
\newblock \doi{10.1093/imaiai/iaaa017}.

\bibitem[Barratt and Boyd(2023)]{barratt2023covariance}
Shane Barratt and Stephen Boyd.
\newblock Covariance prediction via convex optimization.
\newblock \emph{Optimization and Engineering}, 24\penalty0 (3):\penalty0
  2045--2078, 2023.

\bibitem[Ben-Tal and Nemirovski(1998)]{ben1998robust}
Aharon Ben-Tal and Arkadi Nemirovski.
\newblock Robust convex optimization.
\newblock \emph{Mathematics of operations research}, 23\penalty0 (4):\penalty0
  769--805, 1998.

\bibitem[Ben-Tal et~al.(2015)Ben-Tal, Den~Hertog, and Vial]{ben2015deriving}
Aharon Ben-Tal, Dick Den~Hertog, and Jean-Philippe Vial.
\newblock Deriving robust counterparts of nonlinear uncertain inequalities.
\newblock \emph{Mathematical programming}, 149\penalty0 (1-2):\penalty0
  265--299, 2015.

\bibitem[Bertsimas and Kallus(2020)]{bertsimas2020predictive}
Dimitris Bertsimas and Nathan Kallus.
\newblock From predictive to prescriptive analytics.
\newblock \emph{Management Science}, 66\penalty0 (3):\penalty0 1025--1044,
  2020.

\bibitem[Blanquero et~al.(2023)Blanquero, Carrizosa, and
  G{\'o}mez-Vargas]{blanquero2023contextual}
Rafael Blanquero, Emilio Carrizosa, and Nuria G{\'o}mez-Vargas.
\newblock Contextual uncertainty sets in robust linear optimization.
\newblock 2023.

\bibitem[Blondel et~al.(2022)Blondel, Berthet, Cuturi, Frostig, Hoyer,
  Llinares-L{\'o}pez, Pedregosa, and Vert]{blondel2022efficient}
Mathieu Blondel, Quentin Berthet, Marco Cuturi, Roy Frostig, Stephan Hoyer,
  Felipe Llinares-L{\'o}pez, Fabian Pedregosa, and Jean-Philippe Vert.
\newblock Efficient and modular implicit differentiation.
\newblock \emph{Advances in neural information processing systems},
  35:\penalty0 5230--5242, 2022.

\bibitem[Burton et~al.(2020)Burton, Stein, and Jensen]{burton2020systematic}
Jason~W Burton, Mari-Klara Stein, and Tina~Blegind Jensen.
\newblock A systematic review of algorithm aversion in augmented decision
  making.
\newblock \emph{Journal of Behavioral Decision Making}, 33\penalty0
  (2):\penalty0 220--239, 2020.

\bibitem[Butler and Kwon(2023)]{butler2023efficient}
Andrew Butler and Roy~H Kwon.
\newblock Efficient differentiable quadratic programming layers: an admm
  approach.
\newblock \emph{Computational Optimization and Applications}, 84\penalty0
  (2):\penalty0 449--476, 2023.

\bibitem[Byrd et~al.(2000)Byrd, Gilbert, and Nocedal]{byrd2000trust}
Richard~H Byrd, Jean~Charles Gilbert, and Jorge Nocedal.
\newblock A trust region method based on interior point techniques for
  nonlinear programming.
\newblock \emph{Mathematical programming}, 89:\penalty0 149--185, 2000.

\bibitem[Chenreddy et~al.(2022)Chenreddy, Bandi, and Delage]{chenreddy2022data}
Abhilash~Reddy Chenreddy, Nymisha Bandi, and Erick Delage.
\newblock Data-driven conditional robust optimization.
\newblock \emph{Advances in Neural Information Processing Systems},
  35:\penalty0 9525--9537, 2022.

\bibitem[Costa and Iyengar(2023)]{costa2023distributionally}
Giorgio Costa and Garud~N Iyengar.
\newblock Distributionally robust end-to-end portfolio construction.
\newblock \emph{Quantitative Finance}, 23\penalty0 (10):\penalty0 1465--1482,
  2023.

\bibitem[Donti et~al.(2017)Donti, Amos, and Kolter]{donti2017task}
Priya Donti, Brandon Amos, and J~Zico Kolter.
\newblock Task-based end-to-end model learning in stochastic optimization.
\newblock \emph{Advances in neural information processing systems}, 30, 2017.

\bibitem[Duvenaud et~al.(2020)Duvenaud, Kolter, and Johnson]{duvenaud2020deep}
David Duvenaud, J.~Zico Kolter, and Matthew Johnson.
\newblock Deep implicit layers tutorial - neural {ODEs}, deep equilibrium
  models, and beyond.
\newblock \emph{Neural Information Processing Systems Tutorial}, 2020.

\bibitem[Elmachtoub and Grigas(2022)]{elmachtoub2022smart}
Adam~N Elmachtoub and Paul Grigas.
\newblock Smart “predict, then optimize”.
\newblock \emph{Management Science}, 68\penalty0 (1):\penalty0 9--26, 2022.

\bibitem[Gawlikowski et~al.(2021)Gawlikowski, Tassi, Ali, Lee, Humt, Feng,
  Kruspe, Triebel, Jung, Roscher, et~al.]{gawlikowski2021survey}
Jakob Gawlikowski, Cedrique Rovile~Njieutcheu Tassi, Mohsin Ali, Jongseok Lee,
  Matthias Humt, Jianxiang Feng, Anna Kruspe, Rudolph Triebel, Peter Jung,
  Ribana Roscher, et~al.
\newblock A survey of uncertainty in deep neural networks.
\newblock \emph{arXiv preprint arXiv:2107.03342}, 2021.

\bibitem[Gibbs et~al.(2023)Gibbs, Cherian, and Candès]{gibbs2023conformal}
Isaac Gibbs, John~J. Cherian, and Emmanuel~J. Candès.
\newblock Conformal prediction with conditional guarantees, 2023.

\bibitem[Guo et~al.(2017)Guo, Pleiss, Sun, and Weinberger]{guo2017calibration}
Chuan Guo, Geoff Pleiss, Yu~Sun, and Kilian~Q Weinberger.
\newblock On calibration of modern neural networks.
\newblock In \emph{International conference on machine learning}, pages
  1321--1330. PMLR, 2017.

\bibitem[Hannah et~al.(2010)Hannah, Powell, and Blei]{hannah2010nonparametric}
Lauren Hannah, Warren Powell, and David Blei.
\newblock Nonparametric density estimation for stochastic optimization with an
  observable state variable.
\newblock \emph{Advances in Neural Information Processing Systems}, 23, 2010.

\bibitem[Johnstone and Cox(2021)]{johnstone2021conformal}
Chancellor Johnstone and Bruce Cox.
\newblock Conformal uncertainty sets for robust optimization.
\newblock In \emph{Conformal and Probabilistic Prediction and Applications},
  pages 72--90. PMLR, 2021.

\bibitem[Kannan et~al.(2023)Kannan, Bayraksan, and
  Luedtke]{kannan2023residuals}
Rohit Kannan, G{\"u}zin Bayraksan, and James~R Luedtke.
\newblock Residuals-based distributionally robust optimization with covariate
  information.
\newblock \emph{Mathematical Programming}, pages 1--57, 2023.

\bibitem[Kontolati et~al.(2022)Kontolati, Loukrezis, Giovanis, Vandanapu, and
  Shields]{kontolati2022survey}
Katiana Kontolati, Dimitrios Loukrezis, Dimitrios~G Giovanis, Lohit Vandanapu,
  and Michael~D Shields.
\newblock A survey of unsupervised learning methods for high-dimensional
  uncertainty quantification in black-box-type problems.
\newblock \emph{Journal of Computational Physics}, 464:\penalty0 111313, 2022.

\bibitem[Mandi et~al.(2020)Mandi, Stuckey, Guns, et~al.]{mandi2020smart}
Jayanta Mandi, Peter~J Stuckey, Tias Guns, et~al.
\newblock Smart predict-and-optimize for hard combinatorial optimization
  problems.
\newblock In \emph{Proceedings of the AAAI Conference on Artificial
  Intelligence}, volume~34, pages 1603--1610, 2020.

\bibitem[Mandi et~al.(2023)Mandi, Kotary, Berden, Mulamba, Bucarey, Guns, and
  Fioretto]{mandi2023decisionfocused}
Jayanta Mandi, James Kotary, Senne Berden, Maxime Mulamba, Victor Bucarey, Tias
  Guns, and Ferdinando Fioretto.
\newblock Decision-focused learning: Foundations, state of the art, benchmark
  and future opportunities, 2023.

\bibitem[Messoudi et~al.(2022)Messoudi, Destercke, and
  Rousseau]{messoudi2022ellipsoidal}
Soundouss Messoudi, S{\'e}bastien Destercke, and Sylvain Rousseau.
\newblock Ellipsoidal conformal inference for multi-target regression.
\newblock In \emph{Conformal and Probabilistic Prediction with Applications},
  pages 294--306. PMLR, 2022.

\bibitem[Ohmori(2021)]{ohmori2021predictive}
Shunichi Ohmori.
\newblock A predictive prescription using minimum volume k-nearest neighbor
  enclosing ellipsoid and robust optimization.
\newblock \emph{Mathematics}, 9\penalty0 (2):\penalty0 119, 2021.

\bibitem[Patel et~al.(2023)Patel, Rayan, and Tewari]{patel2023conformal}
Yash Patel, Sahana Rayan, and Ambuj Tewari.
\newblock Conformal contextual robust optimization.
\newblock \emph{arXiv preprint arXiv:2310.10003}, 2023.

\bibitem[Romano et~al.(2020)Romano, Barber, Sabatti, and
  Cand{\`e}s]{romano2020malice}
Yaniv Romano, Rina~Foygel Barber, Chiara Sabatti, and Emmanuel Cand{\`e}s.
\newblock With malice toward none: Assessing uncertainty via equalized
  coverage.
\newblock \emph{Harvard Data Science Review}, 2\penalty0 (2):\penalty0 4, 2020.

\bibitem[Ruszczyński and Shapiro()]{doi:10.1137/1.9781611976595.ch6}
Andrzej Ruszczyński and Alexander Shapiro.
\newblock \emph{Chapter 6: Risk Averse Optimization}, pages 223--305.

\bibitem[Sadana et~al.(2023)Sadana, Chenreddy, Delage, Forel, Frejinger, and
  Vidal]{sadana2023survey}
Utsav Sadana, Abhilash Chenreddy, Erick Delage, Alexandre Forel, Emma
  Frejinger, and Thibaut Vidal.
\newblock A survey of contextual optimization methods for decision making under
  uncertainty.
\newblock \emph{arXiv preprint arXiv:2306.10374}, 2023.

\bibitem[Seeger et~al.(2017)Seeger, Hetzel, Dai, Meissner, and
  Lawrence]{seeger2017auto}
Matthias Seeger, Asmus Hetzel, Zhenwen Dai, Eric Meissner, and Neil~D Lawrence.
\newblock Auto-differentiating linear algebra.
\newblock \emph{arXiv preprint arXiv:1710.08717}, 2017.

\bibitem[Sen and Deng(2018)]{sen2018learning}
Suvrajeet Sen and Yunxiao Deng.
\newblock Learning enabled optimization: Towards a fusion of statistical
  learning and stochastic programming.
\newblock \emph{INFORMS Journal on Optimization (submitted)}, 2018.

\bibitem[Shafer and Vovk(2008)]{shafer2008tutorial}
Glenn Shafer and Vladimir Vovk.
\newblock A tutorial on conformal prediction.
\newblock \emph{Journal of Machine Learning Research}, 9\penalty0 (3), 2008.

\bibitem[Sun et~al.(2024)Sun, Liu, and Li]{sun2024predict}
Chunlin Sun, Linyu Liu, and Xiaocheng Li.
\newblock Predict-then-calibrate: A new perspective of robust contextual lp.
\newblock \emph{Advances in Neural Information Processing Systems}, 36, 2024.

\bibitem[Vovk(2012)]{vovk2012conditional}
Vladimir Vovk.
\newblock Conditional validity of inductive conformal predictors.
\newblock In \emph{Asian conference on machine learning}, pages 475--490. PMLR,
  2012.

\bibitem[Wang et~al.(2023)Wang, Becker, Van~Parys, and
  Stellato]{wang2023learning}
Irina Wang, Cole Becker, Bart Van~Parys, and Bartolomeo Stellato.
\newblock Learning for robust optimization.
\newblock \emph{arXiv preprint arXiv:2305.19225}, 2023.

\end{thebibliography}
\clearpage
\appendix
\section{Algorithms} \label{algos}

\begin{algorithm}[!ht]
  \caption{Dual ECRO Training with Trust Region Solver}\label{alg:dualecrotraining}
  \begin{algorithmic}[1]
  \State \textbf{input}: dataset $\Dxipsi$, max epochs $T$, max TRO steps $K$, batch size $N$, protection level $\alpha$
  \State Initialize a warm start  buffer $\{\bar{x}_1,\dots,\bar{x}_M\}$ with each $\bar{x}_i\in\mathcal{X}(\psi_i)$
  \State Initialize network parameters $\theta$ and $t=1$
      \While{not converged and ($t \le T$)}
        \State Sample a batch of $N$ indices $\mathcal{B}\subset \{1,\dots,M\}$
         \For{$i\in \mathcal{B}$}
         \State //Run TRO for up to $K$ steps
        \State $x_i^t, \lambda_i^t, \nu_i^t \gets$ \textsc{TRO}($\bar{x}_{i}$, $\mu_{\theta}(\psi_{i}),\Sigma_\theta(\psi_{i}), K)$
        \State $\bar{x}_i\gets x_i^t$ \Comment{Update warm start}
        \State $y_i^t \gets \1\{\xi_i \in \mathcal{E}(\mu_\theta(\psi_i),\Sigma_\theta(\psi_i))\}$
      \EndFor
      \State $\phi^t \gets$ \textbf{solve} prob \eqref{eq:NLL} for $\{(\psi_i,y_i^t)\}_{i\in\mathcal{B}}$
        \State \textbf{compute} $\mathcal{L}_{DT}(\theta)$ and $\nabla_\theta\mathcal{L}_{DT}(\theta)$ for $i\sim\mathcal{B}$
        \State $\theta \gets \theta - \mbox{step size}\cdot\nabla_\theta\mathcal{L}_{DT}(\theta)$ 
        
      \EndWhile
      
      \State \textbf{return} $\theta$
  \end{algorithmic}
\end{algorithm}

\section{Supplementary for Experiments}
\subsection{Convergence comparison} \label{fig:covcomp}
\begin{figure}[ht]
\centering
  \includegraphics[width=0.7\textwidth]{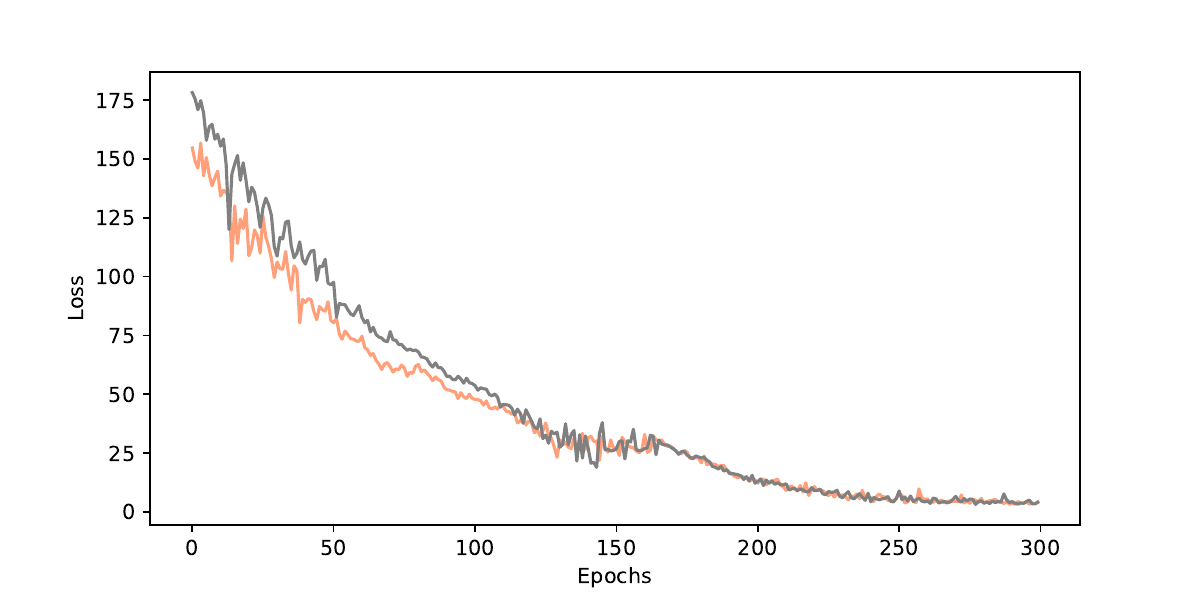}
\\
\legendsquare{grey}~ \begin{tiny} $5$-steps TRO\end{tiny} \quad 
 \legendsquare{salmon}~ \begin{tiny} TRO \end{tiny} 
\caption{Convergence comparison between $5$-steps TRO (46 min) and full TRO (129 min).\label{results:modelcomp}}
\vskip -0.3in
\end{figure}
\subsection{Synthetic conditional data generation}\label{sec:synth_data_gen}
As we have access to the paramters of the simulation environment, we can sample the  conditional multivariate Gaussian distribution of $\xi$ upon observing $\psi$  as:
\begin{align*}
&\mu(\xi)|\psi = \boldsymbol{\mu}_{\xi} + \mathbf{\Sigma}_{\xi\psi} \cdot \mathbf{\Sigma}_{\psi\psi}^{-1} \cdot (\psi - \boldsymbol{\mu}_{\psi}) \\
&\Sigma(\xi)|\psi = \mathbf{\Sigma}_{\xi\xi} - \mathbf{\Sigma}_{\xi\psi} \cdot (\mathbf{\Sigma}_{\psi\psi})^{-1} \cdot \mathbf{\Sigma}_{\psi\xi}
\end{align*}
where $\boldsymbol{\mu}_{\xi}$ is the mean vector of the dependent variables. $\boldsymbol{\mu}_{\psi}$ is the mean vector of the independent variables. $\mathbf{\Sigma}_{\xi\xi}$ is the covariance matrix of the dependent variables. $\mathbf{\Sigma}_{\xi\psi}$ is the cross-covariance matrix between dependent and independent variables. $\mathbf{\Sigma}_{\psi\psi}$ is the covariance matrix of the independent variables, and $\mathbf{\psi}$ is the observed independent variables. We sample data from these conditional distributions $\xi|\psi \sim \mathcal{N}(\mu_{\psi}(\xi), \Sigma_{\psi}(\xi))$ and compare the coverage of these observations.

\subsection{Parameter tuning procedure}\label{sec:portfolio}
In this section, we explore the parameter tuning methodology applied to train the network introduced in Section \ref{portfolioexp}. Given the time series nature of the data, we employ a rolling window technique for network training. Our architecture depends on a set of hyperparameters, defined as follows: \(lr\) for learning rate, \(T\) for the maximum number of epochs, \(K\) for the maximum TRO steps, \(B\) for the batch size, and \(\alpha\) for the target level. We partition the data into training and validation periods and examine the optimal combination through grid search. For each combination, we train the network and derive the optimal policy using the training data, then applying it to the unseen validation data. The optimal combination is selected based on the lowest CVaR on the validation dataset, viewing this as a worst-case return minimization problem.

Regarding the \EECRO{} algorithm, which balances between two losses—the CRO objective and the conditional coverage loss—we follow a specific strategy to identify the best performing model. At each epoch, we save the model and initiate model selection only after achieving the required training coverage. Subsequently, we retain the best models meeting the coverage criteria until convergence conditions are met. Among all saved models meeting the coverage requirement, we choose the one with the best CVaR objective.

\newpage
\subsection{Architecture} \label{architecture}
\begin{figure}[!ht]
  \centering
  \includegraphics[width=68mm]{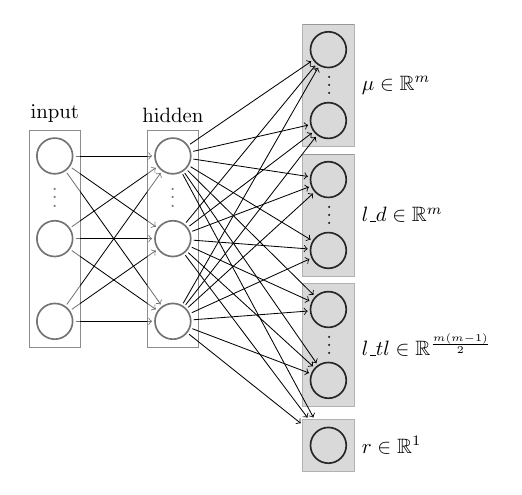}
  \caption{Example Neural Network.}
  \label{fig:nn}
\end{figure}

\end{document}